\newcommand{\ci}{\perp\!\!\!\perp}
\newtheorem{theorem}{Theorem}
\title{Counterfactual Normalization: Proactively Addressing Dataset Shift and Improving Reliability Using Causal Mechanisms}
\author{ {\bf Adarsh Subbaswamy} \\
Department of Computer Science \\
Johns Hopkins University\\
Baltimore, MD 21218 \\
\And
{\bf Suchi Saria}  \\
Department of Computer Science\\
Johns Hopkins University\\
Baltimore, MD 21218 \\
}
\begin{document}

\maketitle

\begin{abstract}
Predictive models can fail to generalize from training to deployment environments because of dataset shift, posing a threat to model reliability and the safety of downstream decisions made in practice. Instead of using samples from the target distribution to reactively correct dataset shift, we use graphical knowledge of the causal mechanisms relating variables in a prediction problem to proactively remove relationships that do not generalize across environments, even when these relationships may depend on unobserved variables (violations of the ``no unobserved confounders'' assumption). To accomplish this, we identify variables with unstable paths of statistical influence and remove them from the model. We also augment the causal graph with latent counterfactual variables that isolate unstable paths of statistical influence, allowing us to retain stable paths that would otherwise be removed. Our experiments demonstrate that models that remove vulnerable variables and use estimates of the latent variables transfer better, often outperforming in the target domain despite some accuracy loss in the training domain.
\end{abstract}

\section{INTRODUCTION}
Classical supervised machine learning methods for prediction problems assume that training and test data are independently and identically distributed from a fixed distribution over the input features $\mathbf{X}$ and target output label $T$, $p(\mathbf{X}, T)$. When this assumption does not hold, training with classical frameworks can yield unreliable models and, in the case of safety-critical applications like medicine, dangerous predictions \citep{dyagilev2016learning,caruana2015intelligible,schulam2017NIPS}. Unreliable models may have performance that is not \emph{stable}---model performance varies greatly when the test distribution is different from the training distribution in scenarios where invariance to the underlying changes are desirable and expected. Unreliability arises because models are often deployed in dynamic environments that systematically differ from the one in which the historical training data was collected---a problem known as \emph{dataset shift} which results in poor generalization. Most existing methods for addressing dataset shift are reactive: they use unlabeled data from the target distribution during the learning process (see \citet{quionero2009dataset} for an overview). However, when the differences in environments are unknown prior to model deployment (e.g., no available data from the target environment), it is important to understand what aspects of the prediction problem can change and how we can train models that will be robust to these changes. In this work we consider this problem of \emph{proactively addressing dataset shift} for discriminative models.

To illustrate, we will consider diagnosis, a problem common to medical decision making. The goal is to detect the presence of a target condition $T$. The features used can be split into three categories: \emph{risk factors} for the target condition (causal antecedents), \emph{outcomes} or symptoms of the condition (causal descendents), and \emph{co-parents} that serve as alternative explanations for the observations (e.g., comorbidities and treatments). The causal mechanisms (directional knowledge of causes and effects, e.g., beta blockers lower blood pressure) relating variables in a prediction problem can be represented using directed acyclic graphs (DAGs), such as the one in Figure \ref{fig:dag}a. As an example (Figure \ref{fig:dag}b), a hospital may wish to screen for meningitis $T$, which can cause blood pressure (BP) $Y$ to drop dangerously low. Smoking $D$ is a risk factor for meningitis, and also causes heart disease for which patients are prescribed  beta blockers $C$ (a type of medication that lowers blood pressure). However, \emph{domain-dependent confounding} (Figure \ref{fig:dag}b) and \emph{selection bias} (Figure \ref{fig:dag}c) can cause certain distributions in the graph to change across domains, resulting in dataset shift.

Consider domain-dependent confounding in which relevant variables may be unobserved and distributions involving these variables may change across domains. In diagnosis, unobserved variables are likely to be risk factors (e.g., behavioral factors, genetics, and geography) that confound the relationship between the target condition and comorbidities/treatments. For example (Figure \ref{fig:dag}b), smoking ($D$) may not be recorded in the data, and the policy used to prescribe beta blockers to smokers $p(C|D)$ will vary between doctors and hospitals. When $D$ is observed, the changes in the prescription policy can be adjusted for. More generally, others have described solutions to ensuring model stability across environments with differences in policies \citep{schulam2017NIPS}. Specifically, they optimize the \emph{counterfactual} risk to explicitly account for variations in policy between train and test environments (e.g., \citet{swaminathan2015counterfactual,schulam2017NIPS}). However, this requires \emph{ignorability} assumptions (also known as the \emph{no unobserved confounders} assumption in causal inference), that may not hold in practice (such as when $D$ is not observed). Violations of this assumption have implications on model reliability. For example, in Figure \ref{fig:dag}b by $d$-separation \citep{koller2009probabilistic} $C$ has two active paths to $T$ when conditioned on $Y$: $C\leftarrow D \rightarrow T$ and $C\rightarrow Y \leftarrow T$. The first path is \emph{unstable} because it contains an edge $D\rightarrow C$ encoding the distribution that changes between environments $p(C|D)$. The second path, however, encodes medical effects that are stable---$p(Y|T,C)$ does not change. Naively including $C$ and $Y$ in the model will capture both paths, leaving the model \emph{vulnerable} to learning the relationship along the unstable path.

Similarly, selection bias (Figure \ref{fig:dag}c) adds auxiliary variables to the graph (i.e., $S$) which can create unstable paths that contribute to model unreliability. Certain subpopulations with respect to the target and comorbidities may be underrepresented in the training data ($S=1$). For example, patients without meningitis who take beta blockers ($T=0,C=1$) may be underrepresented because they rarely visit the hospital due to a local chronic care facility which helps them manage their chronic condition. This introduces a new unstable active path from $C$ to $T$: $C \rightarrow S \leftarrow T$. As before, the path through $Y$ remains stable. In the case of selection bias or  domain-dependent confounding, can we remove the influence of unstable paths while retaining the influence of stable paths?

We propose removing \emph{vulnerable} variables---variables with unstable active paths to the target--- from the conditioning set of a discriminative model in order to learn models that are stable to changes in environment. In Figure \ref{fig:dag}, this means we must remove $C$ from the model. In doing so, $Y$ becomes vulnerable as well because of the paths $Y\leftarrow C \leftarrow D \rightarrow T$ in \ref{fig:dag}b and $Y\leftarrow C \rightarrow S \leftarrow T$ in \ref{fig:dag}c, so we must remove $Y$. While this removes all unstable paths, it also removes stable paths (in fact, it removes all stable paths in this example). However, in certain situations we describe, we can retain some of the stable paths between the target vulnerable variables by considering counterfactual variables. In our example, if we somehow knew an adjusted counterfactual value of $Y$, denoted $Y(C=\emptyset)$---the value of $Y$ for which the effects of $C$ were removed (e.g., the blood pressure had the patient not been treated)---then this adjusted $Y$ would only contain the information along the stable path $T\rightarrow Y$. This concept is inspired by potential outcomes in causal inference and allows us to retain stable paths that would otherwise be removed along with the unstable paths.

\begin{figure}[!t]
\begin{center}
\centerline{\includegraphics[scale=0.28]{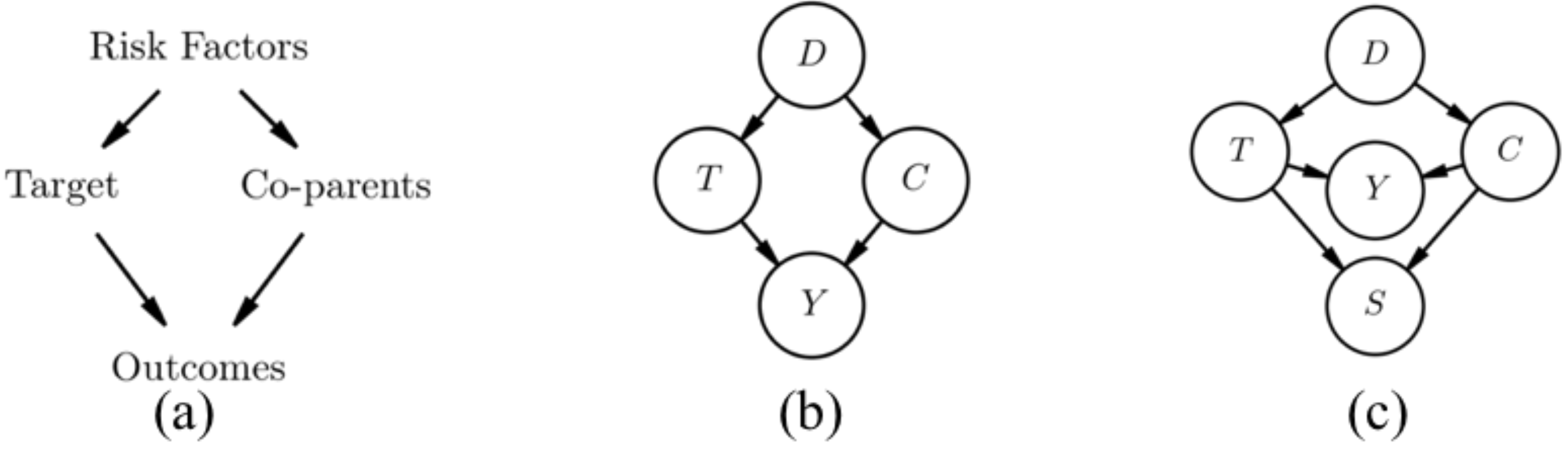}}
\caption{(a) General diagnosis DAG. (b) The DAG capturing causal mechanisms for the medical screening example. The features are blood pressure $Y$ and beta blockers $C$. The target label $T$ is meningitis. Smoking $D$ is unobserved. (c) Selection  bias $S$ is introduced.}
% \caption{(a) The DAG capturing causal mechanisms for the medical screening example. The features are blood pressure $Y$ and heart failure $C$. The target label $T$ is meningitis. (b) Selection bias $S$ is included. (c) Domain-dependent confounding is shown. $C$ represents narcotics and $D$ a latent risk factor, brain surgery. Shaded nodes denote observed variables.}
\label{fig:dag}
\end{center}
\vskip -0.4in
\end{figure}

\textbf{Contributions:} First, we identify variables which make a statistical model \emph{vulnerable} to learning unstable relationships that do not generalize across datasets (due to selection bias or unobserved domain-dependent confounding) which must be removed from a discriminative model for its performance to be stable. Second, we define a \emph{node-splitting} operation which modifies the DAG to contain interpretable latent counterfactual variables which isolate unstable paths allowing us to retain some stable paths involving vulnerable variables. By allowing unstable paths to depend on unobserved variables, we generalize previous works that learn stable models by assuming there are no unobserved confounders, intervening on the unstable policy, and predicting potential outcomes (see e.g., \cite{schulam2017NIPS}). Third, we provide algorithms for determining stable conditioning sets and which counterfactuals to estimate. Fourth, we explain how including the latent features can make a classification problem measurably simpler due to their reduced variance. In simulated and real data experiments we demonstrate that our method improves stability of model performance.

\section{RELATED WORK}\label{sec:relwork}
\textbf{Proactive and Reactive Approaches:}
Reactive predictive modeling methods for countering dataset shift typically require representative unlabeled samples from the test distribution \citep{storkey2009training}. These methods work by re-weighting the training data or extracting transferable features  (e.g., \citet{shimodaira2000improving, gretton2009covariate, gong2016domain,zhang2013domain}). To proactively address perturbations of test distributions, recent work considers formal \emph{verification} methods for bounding the performance of trained models on perturbed inputs (e.g., \citet{raghunathan2018certified,dvijotham2018dual}). Complementary to this, others have developed methods based on distributional robustness for training models to be minimax optimal to perturbations of bounded magnitude in order to guard against adversarial attacks \citep{sinha2018certifying} and improve generalization \citep{rothenhausler2018anchor}. We consider the related problem of training models that are stable to arbitrary shifts in distribution.

Beyond predictive modeling, previous work has considered estimation of causal models in the presence of selection bias and confounding. For example, \citet{spirtes1995causal} learn the structure of the causal DAG from data affected by selection bias. Others have studied methods and conditions for \emph{identification} of causal effects under simultaneous selection and confounding bias (e.g., \citet{bareinboim2012controlling,bareinboim2015selection, correa2018generalized}). \citet{correa2017selectionidentification} determine conditions under which interventional distributions are identified without using external data.

\textbf{Transportability:}
The goal of an experiment is for the findings to generalize beyond a single study, a concept known as \emph{external validity} \citep{campbell1963experimental}. Similarly, in causal inference \emph{transportability}, formalized in \citet{pearl2011transportability}, transfers causal effect estimates from one environment to another. \citet{bareinboim2013meta} generalize this to transfer causal knowledge from multiple source domains to a single target domain. Rather than transfer causal estimates from source to target, the proposed method learns a single statistical model whose predictions should perform well on the source domain while also generalizing well to new domains.

\textbf{Graphical Representations of Counterfactuals:}
The node-splitting operation we introduce in Section \ref{sec:cfn} is similar to the node-splitting operation in Single World Intervention Graphs (SWIGs) \citep{SWIG}. However, intervening in a SWIG results in a generative graph for a potential outcome with the factual outcome removed from the graph. By contrast, our node-splitting operation yields a modified generative graph of the factual outcomes with new intermediate counterfactual variables.
Other graphical representations such as twin networks \citep{pearl2009} and counterfactual graphs \citep{shpitser2007counterfactuals} simultaneously represent factual and counterfactual outcomes, rather than the intermediate counterfactuals exploited in this work.

\section{METHODS}\label{sec:cfn}

\subsection{BACKGROUND}
%\subsubsection{Potential Outcomes}
\textbf{Potential Outcomes}

The proposed method involves the estimation of counterfactuals, which can be formalized using the Neyman-Rubin potential outcomes framework \citep{neyman1923,rubin1974}. For outcome variable $Y$ and intervention $A$, we denote the potential outcome by $Y(a)$: the value $Y$ would have if $A$ were observed to be $a$.

In general, the distributions $p(Y(a))$ and $p(Y|A=a)$ are not equal. For this reason, estimation of the distribution of the potential outcomes relies on two assumptions:

\textbf{Consistency}: The distribution of the potential outcome under the observed intervention is the same as the distribution of the observed outcome. This implies $p(Y(a)|A=a) = p(Y|A=a)$.

\textbf{Conditional Ignorability}: $Y(a) \ci A | X$, $\forall a \in A$. There are no unobserved confounders. This implies $p(Y(a)|X, A=a') = p(Y(a)|X, A=a)$.

%\subsubsection{Counterfactuals and SEMs}
\textbf{Counterfactuals and SEMS}

\citet{shpitser2008complete} develop a causal hierarchy consisting of three layers of increasing complexity: association, intervention, and counterfactual. Many works in causal inference are concerned with estimating average treatment effects---a task at the intervention layer because it uses information about the interventional distribution $p(Y(a)|X)$. In contrast, the proposed method requires counterfactual queries which use the distribution $p(Y(a)|Y, a', X)$ s.t. $a\not = a'$ \footnote{The distinction is that $p(Y(a)|X)$ reasons about the effects of causes while $p(Y(a)|Y, a', X)$ reasons about the causes of effects (see, e.g., \citet{pearl2015causes}).}. That is, given that we observed an individual's outcome to be $Y$ under intervention $a'$, what would the distribution of their outcome have been under a different intervention $a$?

In addition to the assumptions for estimating potential outcomes, computing counterfactual queries requires functional or structural knowledge \citep{pearl2009}. We can represent this knowledge using causal structural equation models (SEMs). These models assume variables $X_i$ are functions of their immediate parents in the generative causal DAG and exogenous noise $u_i$: $X_i = f_i(pa(X_i), u_i)$. Reasoning counterfactually at the level of an individual unit requires assumptions on the form of the functions $f_i$ and independence of the $u_i$, because typically we are interested in reasoning about interventions in which the exogenous noise variables remain fixed. We build on this to estimate the latent counterfactual variables introduced within the proposed procedure.

\subsection{COUNTERFACTUAL NORMALIZATION}
\subsubsection{Assumptions About Structure of the Graph}
Counterfactual Normalization uses a DAG, $\mathcal{G}$, that
represents the causal mechanisms relating variables in a prediction problem. Let $\mathbf{O}$ denote the observed variables, and $T\in\mathbf{O}$ be the target variable to predict ($T$ is unobserved in test distributions). We make no further assumptions about the edges relating observed variables. Let $ch(\cdot)$ and $pa(\cdot)$ represent children and parents in $\mathcal{G}$, respectively.

$\mathcal{G}$ can contain unobserved variables $\mathbf{U}$, which we will use to represent domain-dependent confounding. An unobserved variable must have at least two children so that it confounds the relationship between its children. Domain-dependent confounding occurs when $p(\mathbf{U}|pa(\mathbf{U}))$ or $p(ch(\mathbf{U})|\mathbf{U})$ changes across domains. $\mathcal{G}$ can also contain an additional variable $S$ which represents the selection mechanism that induces selection bias in the training data. The mechanism is given by $p(S=1|pa(S))$ where $pa(S)$ is assumed to be nonempty and $S$ is always assumed to be conditioned upon in the training domain.

\newlength{\textfloatsepsave} \setlength{\textfloatsepsave}{\textfloatsep} \setlength{\textfloatsep}{0.1in} 
\begin{algorithm}[!t]\label{alg:stable}
 \KwIn{Graph $\mathcal{G}$, number of variables $N$, observed variables $\mathbf{O}$, target $T$}
 \KwOut{Stable conditioning set $\mathbf{Z}$, Vulnerable set $\mathbf{V}$}
 $\mathbf{Z}=\mathbf{O}\setminus T$\;
 $\mathbf{V}=\emptyset$\;
 \For{$k=1$ to $N-1$}{
 Conditioned on $\mathbf{Z}$, find the set $\mathbf{A}$ of active paths starting with $T$ and ending at $v\in \mathbf{Z}$ of length $k$\;
 \For{active path $a\in\mathbf{A}$}{
    $v=$ last variable in $a$\;
    \If{$a$ is unstable}{
    $\mathbf{Z} = \mathbf{Z}\setminus v$\;
    $\mathbf{V} = \mathbf{V}\bigcup v$\;
    }
 }
 }
 \caption{Constructing a Stable Conditioning Set}
\end{algorithm} 

\subsubsection{Constructing a Stable Set}
The goal of Counterfactual Normalization is to find a set of observed variables and adjusted versions of observed variables that contains no active unstable paths while maximizing the number of active stable paths it contains. First, we define an \emph{unstable path} to be a path to the target $T$ that contains variables or edges which encode a distribution that can change across environments. These are edges involving unobserved variables $\mathbf{U}$ (domain-dependent confounding) or the selection mechanism variable $S$. Thus, an unstable path is a path to $T$ which contains $S$ or a variable in $\mathbf{U}$.

We can find a set, $\mathbf{Z}$, of observed variables with no active unstable paths using Algorithm \ref{alg:stable}, which considers active paths of increasing length that begin with $T$, and removes \emph{vulnerable} variables $\mathbf{V}$ reachable by unstable active paths. 
\begin{theorem}
Algorithm \ref{alg:stable} will result in a set $\mathbf{Z}$ that contains no unstable active paths to $T$.
\end{theorem}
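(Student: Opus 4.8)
The plan is to argue by contradiction using a shortest offending path, exploiting the two structural features of Algorithm~\ref{alg:stable}: it sweeps path lengths in increasing order, and the conditioning set $\mathbf{Z}$ only ever shrinks. First I would record the relevant $d$-separation facts, namely that a path is active given $\mathbf{Z}$ iff every collider on it is in $\mathbf{Z}$ or has a descendant in $\mathbf{Z}$ and every non-collider lies outside $\mathbf{Z}$, and that since any simple path has length at most $N-1$ the outer loop over $k=1,\dots,N-1$ really does range over all candidate lengths. Let $\mathbf{Z}^\ast$ denote the set returned at termination, and suppose for contradiction that some $v\in\mathbf{Z}^\ast$ is reached from $T$ by an unstable path $P$ that is active given $\mathbf{Z}^\ast$; choose such a $P$ of minimum length $\ell$, and note that its unstable element (an $S$ or $\mathbf{U}$ vertex) must lie in the interior of $P$, since its endpoints $T$ and $v$ are observed.

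Next I would compare $P$ against the state of the algorithm at iteration $k=\ell$. Write $\mathbf{Z}_s$ for the conditioning set at the start of that iteration; because variables are only removed, $\mathbf{Z}^\ast\subseteq\mathbf{Z}_s$. If $P$ is active given $\mathbf{Z}_s$, then $P$ is one of the length-$\ell$ active paths enumerated at that iteration, it ends at $v\in\mathbf{Z}_s$, and it is unstable, so the algorithm deletes $v$, contradicting $v\in\mathbf{Z}^\ast$. Hence $P$ must be blocked under $\mathbf{Z}_s$ yet open under the smaller $\mathbf{Z}^\ast$. Since $P$ is active under $\mathbf{Z}^\ast$, every collider on it is already open under $\mathbf{Z}^\ast$ and therefore, a fortiori, open under the larger $\mathbf{Z}_s$; so the block present under $\mathbf{Z}_s$ can only come from an internal \emph{non-collider} $n\in\mathbf{Z}_s\setminus\mathbf{Z}^\ast$ of $P$, a variable the algorithm subsequently deletes. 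By the deletion rule, $n$ was removed because of some unstable active path $Q$ from $T$ to $n$.

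The heart of the argument is then to turn this enabling deletion into a contradiction with the minimality of $P$. Splitting $P$ at $n$ into a prefix $P_1$ (from $T$ to $n$) and a suffix $P_2$ (from $n$ to $v$), both of length strictly less than $\ell$, one checks that each is active given $\mathbf{Z}^\ast$, because their interior nodes keep the roles they had on $P$ and $n$ is now an endpoint, so it no longer blocks. I would then locate the unstable element of $P$: if it lies on $P_1$, then $P_1$ is itself a shorter unstable active path, which must already have forced the deletion of $n$; if it lies on $P_2$, I would splice $Q$ onto $P_2$ to build an unstable active path that reaches $v$, so that $v$ is deleted at a later iteration. In either situation the goal is to contradict that $v$ survives.

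I expect the splice step to be the main obstacle. Concatenating $Q$ with $P_2$ produces a walk whose behaviour at the junction $n$ is not controlled: although $n$ is a non-collider on $P$, it may become a collider on $Q\oplus P_2$, and such a collider is open only if $n$ retains a descendant in $\mathbf{Z}^\ast$. Because removals act non-monotonically on activity, deleting a non-collider can open a previously blocked path while deleting a collider or one of its descendants can re-block one, so the delicate point is to show that whenever $n$'s deletion opens $P$, either the spliced path is genuinely active and reaches $v$, or the very removal that would close the junction collider also re-blocks $P$, contradicting $P$'s activity under $\mathbf{Z}^\ast$. Making this bookkeeping precise, together with reducing non-simple concatenations to simple ones, is where the real content lies; the increasing-length sweep is exactly what guarantees that every enabling deletion has already occurred by the time the longer spliced path is examined.
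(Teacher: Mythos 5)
Your setup is sound and shares the paper's key ingredients: argue by contradiction, exploit the increasing-length sweep, and use the fact that since $\mathbf{Z}$ only shrinks, any collider open under $\mathbf{Z}^\ast$ is open under every earlier conditioning set, so a path to a surviving variable can only be \emph{opened} by deleting an internal non-collider. Your Case A (if $P$ is active at the start of iteration $\ell$ it is enumerated and $v$ is deleted) is correct. The genuine gap is that you never close Case B, and the route you sketch for it fails for exactly the reasons you list. Splicing $Q$ onto $P_2$ is not controllable: the junction node $n$ may be a collider on the spliced walk, and since $n$ has just been removed from $\mathbf{Z}$ that walk may be permanently blocked; even when it is open, its length exceeds the current iteration index, so you would need it to be active at the start of a \emph{future} iteration matching its length, which nothing guarantees (later removals can re-block it, and the argument recurses with no termination measure). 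Your sub-case B1 also produces no contradiction as stated: observing that $P_1$ ``must already have forced the deletion of $n$'' is consistent with what actually happened ($n$ \emph{was} deleted); it says nothing about $v$ being deleted.

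The missing idea --- and the way the paper closes precisely this step --- is a \emph{length bound argued locally at each removal}, rather than a global splice. The paper proves the invariant: a removal performed at iteration $k$ cannot create an active unstable path of length $\leq k$ to a variable still in $\mathbf{Z}$. Its case analysis on how the removed variable $v$ sits on the hypothetical new path to a survivor $x$ has two mechanisms: (i) if the new path's active status does not actually depend on $v$ (the survivor lies on the offending path with the unstable segment positioned so the path to $x$ was already active), then $x$ would have been removed at an earlier iteration or at iteration $k$ itself; (ii) if the new path passes through $v$ as a non-collider with the unstable element on the $T$-side, then its prefix is itself an unstable active path to $v$, which must have length $\geq k$ (otherwise $v$ would have been removed earlier in the sweep), so the new path has length $> k$ and is not a violation; colliders at $v$ are closed by the removal. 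Once this invariant is in hand, your Case B collapses with no splicing and no reasoning about future iterations: take the last removal after which $P$ remains active; it occurs at some iteration $k' \geq \ell$ and opens $P$, an unstable active path of length $\ell \leq k'$ ending at the survivor $v$, directly contradicting the invariant. You identified where the content lies, but the content itself --- the prefix-length argument that makes newly opened paths provably too long for the sweep to have missed them --- is absent from your proposal.
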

\begin{proof}[Proof Sketch]
We show that on iteration $k$, removing a variable from $\mathbf{Z}$ does not create an active unstable path to a member of $\mathbf{Z}$ of length $\leq k$ (see supplement).
\end{proof}

We now consider expanding the stable conditioning set $\mathbf{Z}$ by including some variables in $\mathbf{V}$ or adjusted versions of these variables. The adjusted versions are counterfactuals which we place on a modified DAG $\mathcal{G}^*$ through a procedure called \emph{node-splitting}.

\setlength{\textfloatsep}{0.1in} 
\begin{algorithm}[!t]\label{alg:node}
\DontPrintSemicolon
 \KwIn{Graph $\mathcal{G}$, node $Y$, observed parents of $Y$ to intervene upon $\mathbf P$}
 \KwOut{Modified graph $\mathcal{G}^*$}
 1. Insert counterfactual node $Y(\mathbf{P}=\bm{\emptyset})$\;
 2. Delete edges $\{x\rightarrow Y: x\in pa(Y)\setminus \mathbf{P}\}$ \;
 3. Insert edges $\{x\rightarrow Y(\mathbf{P}=\bm{\emptyset}):x\in pa(Y)\setminus \mathbf{P}\}$\;
 4. Insert edge $Y(\mathbf{P}=\bm{\emptyset})\rightarrow Y$\;
 \caption{Node-splitting Operation}
\end{algorithm} 
\subsubsection{Node-Splitting}
Assume each variable $v\in\mathcal{G}$ has a corresponding structural equation in which it is a function of its parents and an exogenous, unobserved, and independent noise term: $v=f_v(pa(v),\varepsilon_v)$. We want to compute a counterfactual version of $v$ in which we remove the effects of (i.e., intervene upon) some of its parents. Denote the set of parents we intervene upon as $\mathbf{P}$. Given $v$'s factual value and the factual values of $\mathbf{P}$, we calculate the counterfactual value $v(\mathbf{P}=\emptyset)$ (remove the effects of parents in $\mathbf{P}$ by intervening and setting these parents to ``null''). In the diagnosis example of Figure \ref{fig:dag}b, an example counterfactual variable would be $Y(C=\emptyset)$: the patient's blood pressure if we removed the effects of the treatments they were given. Note that we must observe the factual value of parents we intervene on---they must be observed variables.

Removing the effects of only a subset of the parents requires being able to consider the effects of a parent while holding fixed the effects of the other parents of the variable. For this reason, we assume that the effects of parents on children are independent---they have no interactions. We specifically consider \emph{additive} structural equations which satisfy this requirement. Estimation of the counterfactuals requires fitting the relevant structural equations using the factual outcome data by maximum likelihood estimation. We can now define the node-splitting operation, which is given in Algorithm \ref{alg:node}. Given a variable and the subset of its parents to intervene upon, we set the intervened parents to ``null'' and place a latent counterfactual version of the variable onto the graph as a parent of its factual version. Unlike traditional SEM interventions, we retain the factual version of the parents we intervene on in the graph. The counterfactual version subsumes the parents (in the original graph $\mathcal{G}$) of its factual version that were not intervened upon. The modified graph $\mathcal{G}^*$ is an equivalent model of the factual data generating process.

The consequence of node-splitting is that while the factual version of a variable may be vulnerable, after intervening on some of its parents its counterfactual version may no longer be vulnerable. Consider a vulnerable variable $v$ which, if added to a stable conditioning set $\mathbf{Z}$, would yield at least one unstable active path to $T$. If the unstable path is of the form $v \leftarrow X \dots T$, then since $X$ is a parent of $v$ we can intervene on $X$. After node-splitting the new path would be $v(X=\emptyset)\rightarrow v \leftarrow X \dots T$. Since $v$ is not conditioned on, this collider path is blocked. Thus, this unstable path is not active for $v(X=\emptyset)$ though it was active for $v$. However, if the unstable path were of the form $v \rightarrow X \dots T$, then we cannot intervene on $X$ (not a parent of $v$) and any counterfactual version of $v$ will inherit the unstable active path: $v(\emptyset)\rightarrow v\rightarrow X \dots T$. The first case shows that for unstable paths from a vulnerable variable $v$ that begin through a parent of $v$, intervening on the parent yields a counterfactual in which these unstable paths are not active. The second case shows that unstable active paths from $v$ that begin through a child of $v$ cannot be removed by node-splitting. We can also intervene on a variable's observed parents that are not along unstable paths. As we discuss in Section \ref{sec:complexity}, the potential benefit is that counterfactual variables have reduced variance than their factual versions.

A question remains: does conditioning upon a stable counterfactual version of a vulnerable variable cause any unstable paths to become active? Conditioning on a variable can only open collider paths, so the only cases we must consider are when the counterfactual is a collider or descendant of a collider. In these cases, the active paths that meet at the collider are reachable by the counterfactual through at least one of its parents. However, we know that these paths are stable since the counterfactual is stable: we would have intervened on any parents which were along unstable paths. Thus, conditioning on a stable counterfactual does not activate any new unstable paths.

\setlength{\textfloatsep}{0.1in} 
\begin{algorithm}[!t]\label{alg:CFN}
 \KwIn{Graph $\mathcal{G}$, Vulnerable set in reverse topological order $\mathbf{V}$, Stable set $\mathbf{Z}$, Target $T$}
 \KwOut{Final conditioning set $\mathbf{Z'}$}
 $\mathbf{Z'}=\mathbf{Z}$\;
 \For{$v\in\mathbf{V}$}{
 \uIf{$v$ has no active stable paths w.r.t. $\mathbf{Z'}$}{pass\;}
 \uElseIf{$v$ has no active unstable paths w.r.t. $\mathbf{Z'}$}{$\mathbf{Z'}=\mathbf{Z'}\bigcup v$}
 \uElseIf{all unstable paths w.r.t. $\mathbf{Z'}$ from $v$ to $T$ are through observed parents $\mathbf{P}\subseteq pa(v)$ of $v$}{
 Node-split and modify $\mathcal{G}$\;
 $\mathbf{Z'}=\mathbf{Z'}\bigcup v(\mathbf{P}=\emptyset)$\;}
 \Else{pass\;}
 }
 Prune $\mathbf{Z'}$ of variables with no active stable paths.
 \caption{Retaining Vulnerable Variables}
\end{algorithm} 

\subsubsection{Adding to the Conditioning Set}
After finding a stable set $\mathbf{Z}$ of observed variables to condition upon, we must consider adding back each of the vulnerable variables that were removed. First, there may be variables with no unstable active paths because collider paths became inactive after these variables were removed from $\mathbf{Z}$. Second, we know that if a variable's active unstable paths go through observed parents, we can intervene on those parents, node-split in $\mathcal{G}^*$, and add the counterfactual version to the conditioning set. Because conditioning on the counterfactual may open stable paths involving its non-vulnerable parents, we want to make sure that non-vulnerable parents that may be in $\mathbf{V}$ (first case) are considered after the counterfactual. For this reason, we consider adding the variables in $\mathbf{V}$ to $\mathbf{Z}$ in reverse topological order. Algorithm \ref{alg:CFN} shows the procedure for adding variables to $\mathbf{Z}$. We condition on the resulting set, $\mathbf{Z'}$ and use it to predict $T$ by modeling $p(T|\mathbf{Z'})$. 
\begin{theorem}
Algorithm \ref{alg:CFN} does not activate any unstable paths and results in a stable set $\mathbf{Z'}$.
\end{theorem}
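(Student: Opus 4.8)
The plan is to prove both parts at once by establishing the invariant that, at the end of every iteration of the loop (and after the final pruning step), the current set $\mathbf{Z'}$ contains no active unstable path to $T$ ($d$-separation being assessed by conditioning on $\mathbf{Z'}$ and on $S$). The base case is immediate: before the loop $\mathbf{Z'}=\mathbf{Z}$, which is stable by the preceding theorem. For the inductive step I would show that processing a single $v\in\mathbf{V}$ preserves the invariant. Since each element that is added (either $v$ itself or its counterfactual $v(\mathbf{P}=\emptyset)$) joins the conditioning set, there are two obligations at every step: (A) the \emph{newly added} variable has no active unstable path to $T$ with respect to the updated set, and (B) inserting the new variable into the conditioning set does not \emph{activate} an unstable path to $T$ for any element already in $\mathbf{Z'}$.

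The two pass branches change nothing and trivially preserve the invariant. Obligation (A) is discharged by the branch guards. In the second branch the guard is exactly ``$v$ has no active unstable paths,'' so (A) holds by assumption. In the node-splitting branch I would invoke the analysis accompanying the node-splitting operation: after intervening on $\mathbf{P}$, every unstable path that left $v$ through a parent $p\in\mathbf{P}$ now meets the collider $v$ along the segment $v(\mathbf{P}=\emptyset)\rightarrow v\leftarrow p$, which is blocked because the factual node $v$ is never conditioned on; the branch guard forbids unstable paths leaving $v$ through a child; and any path leaving $v(\mathbf{P}=\emptyset)$ through an inherited parent $q\in pa(v)\setminus\mathbf{P}$ traverses the same nodes as a path of $v$ that must be stable, since otherwise $q$ would witness an unstable path not routed through $\mathbf{P}$, contradicting the guard. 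Hence $v(\mathbf{P}=\emptyset)$ is stable, giving (A). That node-splitting does not disturb the stability of the other members follows from the stated fact that $\mathcal{G}^*$ is an equivalent model of the factual process, so $d$-connection among the original variables is unchanged by the rerouting.

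The crux is obligation (B), and I expect the collider case to be the main obstacle. Writing $w$ for the variable just added, conditioning on $w$ can only activate a path on which $w$ is a collider or a descendant of a collider $c$. Given such a newly active unstable path $\pi$ from some $z\in\mathbf{Z'}$ to $T$, the idea is to reflect it through the directed chain $c\rightarrow\cdots\rightarrow w$: splicing the reverse of this chain onto the $c$-to-$T$ segment of $\pi$ yields a path $\sigma$ from $w$ to $T$ on which $c$ is a non-collider, and one checks that $\sigma$ is active with respect to the updated set (every interior chain node must lie outside $\mathbf{Z'}$, else $\pi$ would already have been active at $c$ before $w$ was added). If the unstable node of $\pi$ lay on the $c$-to-$T$ segment, then $\sigma$ would be an active unstable path from $w$ to $T$, contradicting (A); this is the precise sense in which ``the paths meeting at the collider are reachable from $w$ through a parent and are therefore stable.'' The delicate remaining point, which I would treat carefully, is to rule out the unstable node sitting on the $z$-to-$c$ segment; here I would use that $z$ was certified stable when it entered $\mathbf{Z'}$, together with the reverse-topological processing order, which guarantees that any vulnerable ancestor of $w$ implicated in such a segment is examined only after $w$, so that no previously certified element can acquire a fresh unstable route to $T$.

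Finally, I would verify that the closing prune step is safe. Removing a variable $x$ from the conditioning set can only open a $T$--$z$ path on which $x$ is an interior non-collider; but such a newly opened path would make its $T$-to-$x$ sub-path active with respect to $\mathbf{Z'}\setminus\{x\}$, exhibiting an active path from $x$ to $T$. That contradicts either the invariant (if the sub-path is unstable) or the prune guard ``$x$ has no active stable paths'' (if it is stable). Hence pruning preserves the invariant, and at termination $\mathbf{Z'}$ is stable, which proves the theorem.
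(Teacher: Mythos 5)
Your proof follows the same skeleton as the paper's: induct over the loop, discharge the two adding branches via the branch guards and the node-splitting analysis, reduce activation to the collider case, and check the final prune (a step the paper's own proof silently omits, and which your argument for is sound \emph{given} the invariant). But the step you yourself flag as ``the delicate remaining point''---ruling out an unstable node on the $z$-to-$c$ segment of a newly activated collider path---is a genuine gap, and the fix you sketch does not close it. Reverse-topological processing and certification-at-entry are irrelevant to the bad configuration: the victim $z$ can belong to the original set $\mathbf{Z}$ output by Algorithm \ref{alg:stable} (so it is never re-examined), and the newly unblocked collider can be the added variable $v$ itself. Concretely, take observed $z, c, e_1, e_2$ and unobserved $U, n$ (two children each, as the paper requires), with edges $z \rightarrow T$, $U \rightarrow z$, $U \rightarrow c$, $T \rightarrow c$, $T \rightarrow n$, $c \rightarrow n$, $n \rightarrow e_1$, $n \rightarrow e_2$. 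Algorithm \ref{alg:stable} removes $c, e_1, e_2$ at $k=2$ (the paths $T \rightarrow n \leftarrow c$ and $T \rightarrow n \rightarrow e_i$ are active and unstable while the $e_i$ are still conditioned) and terminates with $\mathbf{Z}=\{z\}$, which is stable because the collider $c$ is now blocked. Algorithm \ref{alg:CFN} then reconsiders $c$: its only candidate unstable path $c \leftarrow U \rightarrow z \rightarrow T$ is blocked at the conditioned non-collider $z$, and $c \rightarrow n \leftarrow T$ is blocked at $n$, so branch 2 fires and $c$ is added. But conditioning on $c$ opens the path $z \leftarrow U \rightarrow c \leftarrow T$, an active unstable path from $z \in \mathbf{Z'}$ to $T$; the prune removes neither variable (both keep the active stable paths $z \rightarrow T$ and $c \leftarrow T$), so the algorithm returns $\mathbf{Z'}=\{z,c\}$, which is not stable. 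So obligation (B) is not merely delicate: as stated, it is false, and no bookkeeping about processing order can rescue it.

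You should know that this is exactly the spot where the paper's own proof goes wrong, only less self-consciously. Its branch-2 argument reads: all branches of an activated collider path are reachable from $v$, $v$ has no active unstable paths to $T$, hence ``none of the branches of the collider can be unstable paths to $T$. Thus, the collider path is not unstable.'' That last inference is a non sequitur: the branch running from the collider to the \emph{other} conditioned endpoint $z$ is not a path to $T$ at all, so the guard on $v$ constrains nothing about it; yet if that branch contains a variable in $\mathbf{U}$, the concatenated path is an unstable path to $T$ (from $z$), as in the example above. In short, you correctly isolated the one step that cannot be discharged; your only error was to assert it could be patched by the reverse-topological order, when in fact it is a counterexample to the theorem as stated (repairing it requires strengthening the branch-2/branch-3 guards, e.g., to also forbid the candidate variable from unblocking, as a conditioned collider or ancestor of one, any unstable path into an existing member of $\mathbf{Z'}$).
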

\begin{proof}[Proof Sketch]
We show all branches in the algorithm do not activate unstable paths (see supplement).
\end{proof}

\begin{figure}[!t]
\begin{center}
\centerline{\includegraphics[scale=0.25]{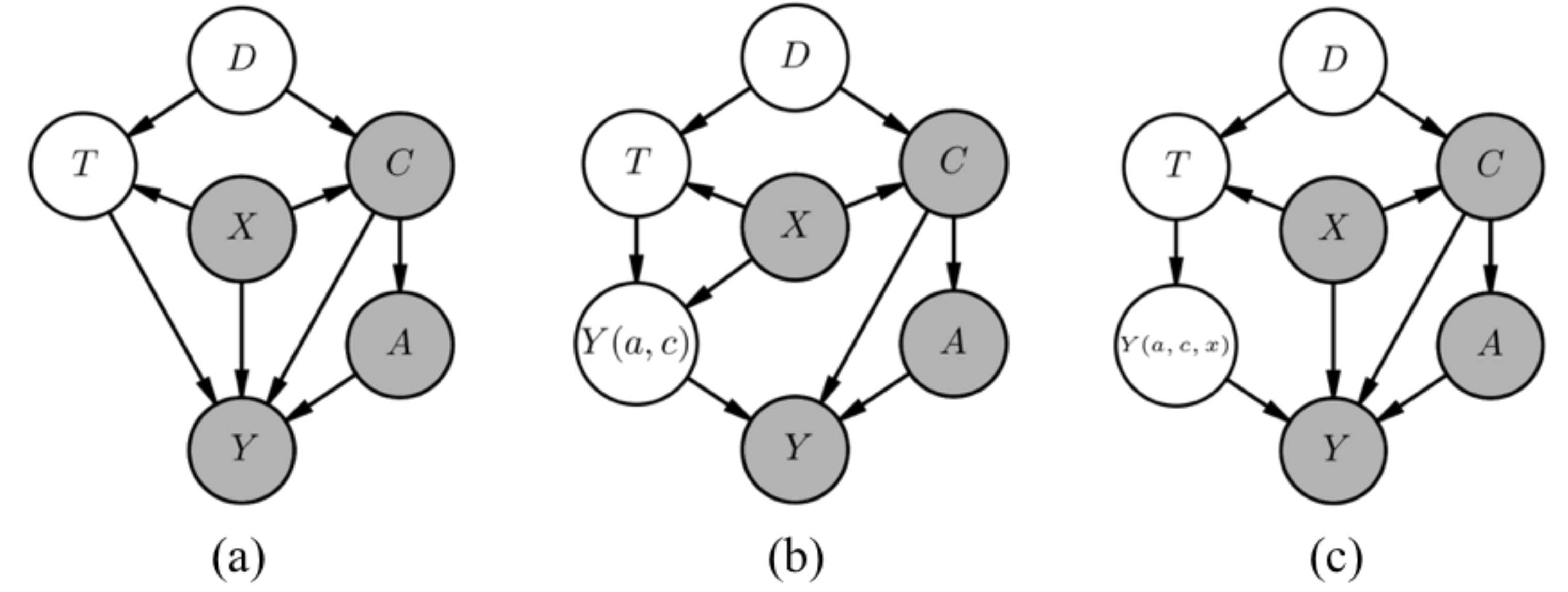}}
\caption{(a) The DAG of causal mechanisms for the expanded medical screening example. (b) The modified DAG after node-splitting yielding the latent signal value under no treatment or chronic condition $Y(a, c)$. (c) The modified DAG after node-splitting and additionally adjusting for other covariates $Y(a,c,x)$.}
\label{fig:cf-dag}            
\end{center}
\vskip -0.2in
\end{figure}
\subsubsection{An Example}
To illustrate node-splitting and Counterfactual Normalization, consider the expanded domain-dependent confounding diagnosis example in Figure \ref{fig:cf-dag}a. $C$ represents a chronic condition (e.g., heart disease), $A$ represents treatments (e.g., beta blockers), and $X$ represents age (a demographic risk factor). $D\in \mathbf{U}$ is an unobserved variable, and we allow $p(C|D)$ to vary across domains.

In finding a stable set $\mathbf{Z}$, we remove $C$ (unstable path of length 2), and then $A$ and $Y$ (unstable paths of length 3) which yields $\mathbf{Z}=\{X\}$. Now we consider the variables in $\mathbf{V}=\{C,Y,A\}$ in reverse topological order. $Y$ has unstable active paths through $A$ and $C$. Since they are observed variables, we intervene on them to generate the counterfactual $Y(C=\emptyset,A=\emptyset)$ in Figure \ref{fig:dag}b and add it to $\mathbf{Z'}$ after node-splitting. Now consider $A$, which has no stable active paths to $T$ so we do not add it to $\mathbf{Z'}$. Similarly, $C$ has no stable active paths. Thus, $\mathbf{Z'} = \{Y(C=\emptyset,A=\emptyset), X\}$ is the conditioning set we would use to predict $T$ by modeling $p(T|\mathbf{Z'})$.

\section{COMPLEXITY METRICS}\label{sec:complexity}
Beyond removing unstable paths, what are other benefits of the proposed method? For binary prediction problems, the geometric complexity (on the basis of euclidean distance) of the class boundary of a dataset can decrease when using the latent counterfactual variables instead of the factual and vulnerable variables. This is similar to the work of \citet{alaa2017} who use the smoothness of the treated and untreated response surfaces to quantify the difficulty of a causal inference problem. To measure classifier-independent geometric complexity we will use two types of metrics developed by \citet{ho2000measuring,ho2002complexity}: measures of overlap of individual features and measures of separability of classes.

For measuring feature overlap, we use the maximum Fisher's discriminant ratio  of the features. For a single feature, this measures the spread of the means for each class ($\mu_1$ and $\mu_2$) relative to their variances ($\sigma_1^2$ and $\sigma_2^2$): $\frac{(\mu_1 - \mu_2)^2}{\sigma_1^2 + \sigma_2^2}$. Since the proposed method uses counterfactual variables in which we have removed the effects of some parents, this removes sources of variance in the variable. Thus, we expect the variances of each class to reduce resulting in increased feature separability and a corresponding increased Fisher's discriminant ratio. 

One measure of separability of classes is based off of a test \citep{friedman1979} for determining if two samples are from the same distribution. First, compute a minimum spanning tree (MST) that connects all the data points regardless of class. Then, the proportion of nodes which are connected to nodes of a different class is an approximate measure of the proportion of examples on the class boundary. Higher values of this proportion generally indicate a more complex boundary, and thus a more difficult classification problem. 

However, this metric is only sensitive to which class neighbors are closer, and not the relative magnitudes of intraclass and interclass distances.  Another measure of class separability is the ratio between the average intraclass nearest neighbor distance and the average interclass nearest neighbor distance. This measures the relative magnitudes of the dispersion within classes and the gap between classes. We expect intraclass distances to decrease because the data units are transformed to have the same value of the intervened parents, reducing sources of variance (e.g., less variance in counterfactual untreated BP than in factual BP).

The non-$T$ parents of a variable add variance to the prediction problem through their effects on children of $T$. By removing their effects from children of $T$, the proposed method can directly increase the signal-to-noise ratio of the classification problem. With respect to the geometric complexity of the class boundary, this manifests itself through reductions in the variance within a class, as we demonstrate in a simulated experiment.

\section{EXPERIMENTS}\label{sec:experiments}
We demonstrate that without requiring samples from the target distribution during training, Counterfactual Normalization results in discriminative models with more stable performance across datasets. In all experiments we train models using only source data and evaluate on test data from both the the source and target domains.

\subsection{SIMULATED EXPERIMENTS}
\subsubsection{Linear Gaussian Example}
\setlength{\textfloatsep}{\textfloatsepsave} 
\begin{figure}[!t]
%\vskip 0.2in
\begin{center}
\centerline{\includegraphics[scale=0.43]{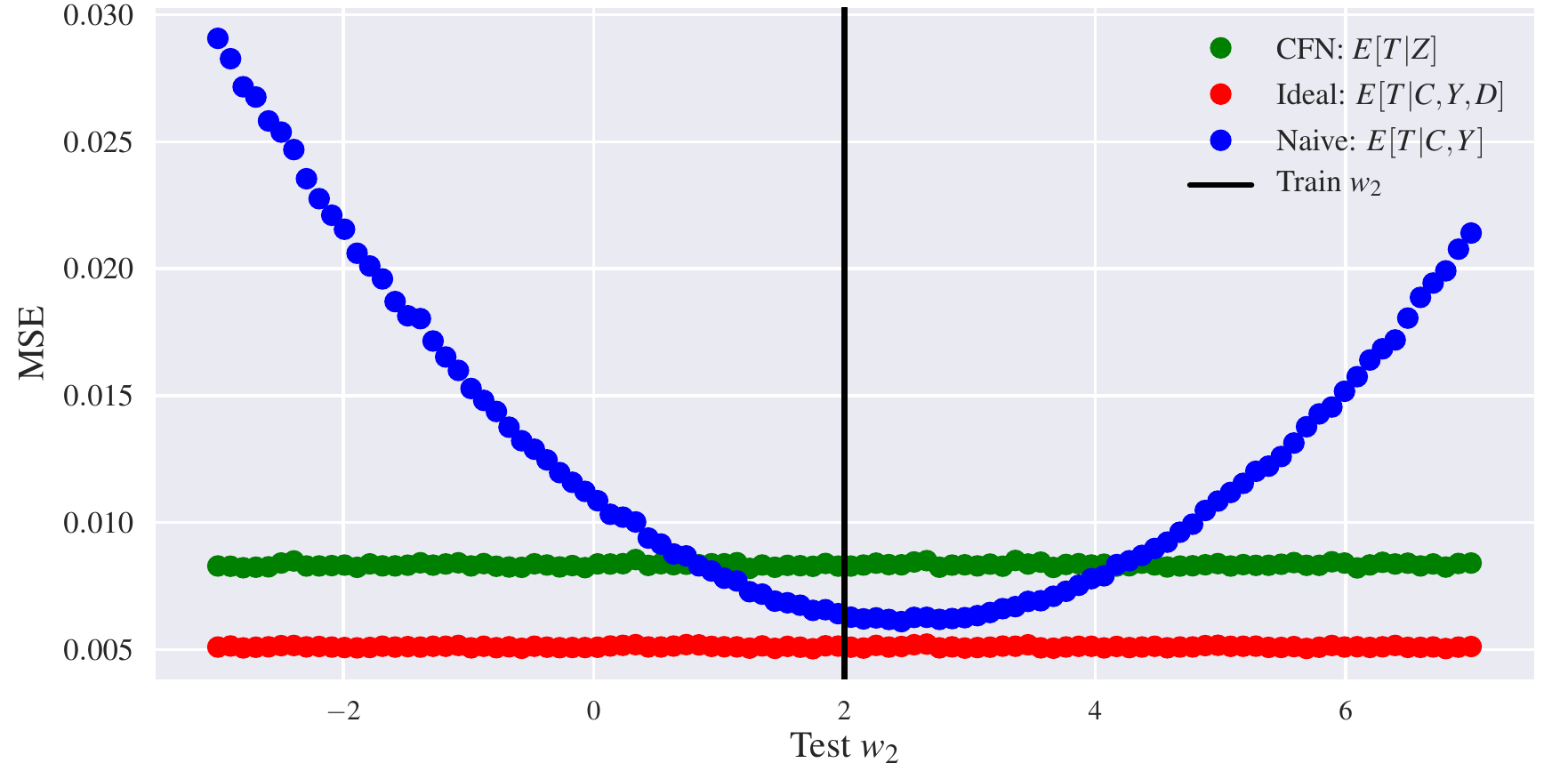}}
\caption{Test MSE as $w_2$ varies in the linear experiment. Performance of CFN and Ideal remains stable while the naive model's performance can arbitrarily worsen.}
\label{fig:lin-mse}
\end{center}
\vskip -0.4in
\end{figure}
We consider a regression version of the simple domain-dependent confounding example in Figure \ref{fig:dag}b in which $D$ is unobserved. We simulate data from linear Gaussian SEMs in which every variable is a linear combination of its parents plus Gaussian noise. Thus, every edge in the graph has a corresponding weight which is the coefficient of the parent in the SEM (for full specification consult the supplement). In particular, we let $C = w_2 D + \varepsilon_C$, $\varepsilon_C \sim \mathcal{N}(0, \sigma^2_C)$. We manifest domain-dependent confounding by varying $w_2$ in different test domains (from $w_2=2$ in the train domain), thus changing $p(C|D)$. 

First, note that in the ideal case if we could observe $D$ then the unstable edge $D\rightarrow C$ is not in any active paths to $T$ when $C$, $Y$ and $D$ are conditioned upon. This means a least squares regression modeling $E[T|C,Y,D]$ will have stable predictive performance regardless of changes to $w_2$ and $p(C|D)$. This is visible in Figure \ref{fig:lin-mse} in which the mean squared error (MSE) of the ideal model (red points) stays constant despite changes in $w_2$.

We could naively ignore changes to $p(C|D)$ and model by conditioning on vulnerable variables $E[T|C,Y]$, but the naive model's performance will vary in test domains as a result of using the unstable path. In fact, the MSE of the naive model (blue points in Figure \ref{fig:lin-mse}) appears to increase quadratically as $w_2$ changes from its training value.

Alternatively, we can use Counterfactual Normalization (CFN). $C$ and $Y$ are vulnerable, but we can condition on $Y(C=\emptyset)$.
First, we fit the structural equation for $Y$ from training data: $Y = \hat{w}_3 T + \hat{w}_4 C$.\footnote{$^\wedge$ denotes an estimated value.} We then estimate $\hat{Y}(\emptyset) = Y - \hat{w}_4C$. Finally, we model $E[T|\hat{Y}(\emptyset)]$ which conditions on a stable set. The MSE of CFN is stable (green points in Figure \ref{fig:lin-mse}), but it can be outperformed by the naive and ideal models because CFN isolates paths that include $D$.

\subsubsection{Cross Hospital Transfer}

\textbf{Ensuring Stable Performance}
\begin{table}[!t]
\centering
\caption{Simulated Experiment Results}
\label{table:sim-exp1}
%\vskip 0.1in
\small
\begin{tabular}{|l|l|l|}
\hline
\textbf{Method}    & \textbf{Source AUROC} & \textbf{Target AUROC} \\ \hline
Baseline & 0.95                  & 0.80                  \\ \hline
CFN                & 0.96                  & 0.97                  \\ \hline
CFN (vuln)      & 0.97                  & 0.92                  \\ \hline
\end{tabular}
\vskip -0.2in
\end{table}

We consider a simulated version of the diagnosis problem in Figure \ref{fig:cf-dag}(a), but remove $X$ from the graph. We let $A$ represent the time since treatment and simulate the exponentially decaying effects of the treatment as $f(A) = 2\exp(-0.08A)$ where the treatment policy depends on $C$. $C$ and its descendants ($A$ and $Y$) are vulnerable. 

We simulate patients from two hospitals (full specification in the supplement). In the source hospital there is a positive correlation between $C$ and $T$, while in the target hospital $p(C|D)$ changes yielding a negative correlation. At the source hospital smaller $A$ are associated with $T=1$ while at the target hospital $A$ is uncorrelated with $T$. The structural equation for $Y$ remains stable: $Y = -0.5T  -0.3C + f(A) + \varepsilon_Y$, $\varepsilon_Y \sim \mathcal{N}(0, 0.2^2)$. We train using data from the source hospital and evaluate performance at both the source and target hospitals.

Counterfactual Normalization requires us to estimate the latent variable  $Y(A=\emptyset, C=\emptyset)$. We first fit the structural equation for $Y$ using maximum likelihood estimation, optimized using BFGS \citep{chong2013introduction}. Then, we compute the counterfactual: $Y_i(A=\emptyset, C=\emptyset) = Y_i - \hat{\beta} C_i - \hat{f}(s_i)$ for every individual $i$ at both hospitals, which can be done without observing $T$. We compare a counterfactual model (CFN) $p(T|Y(\emptyset, \emptyset))$ with a baseline vulnerable model $p(T|Y,A,C)$ and counterfactual model that uses vulnerable variables $p(T|Y(\emptyset, \emptyset), Y,A,C)$ using logistic regression and measure predictive accuracy with the area under the Receiver Operating Characteristic curve (AUROC). 

\begin{table}[!t]
\centering
\caption{Simulated Classification Complexity Metrics}
\label{table:metrics-sim-exp1}
%\vskip 0.1in
\small
\begin{tabular}{|l|l|l|l|}
\hline
\textbf{Method} & \textbf{Fisher's} & \textbf{Distance} & \textbf{MST} \\ \hline
Baseline & 0.66                        & 0.10                    & 0.56                \\ \hline
CFN             & 3.13                        & 0.02                    & 0.22                \\ \hline
\end{tabular}
\vskip -0.1in
\end{table}
The results of evaluation on the patients from the source and target are shown in Table \ref{table:sim-exp1}. The accuracy of models that use vulnerable variables does not transfer across hospitals, with the baseline suffering large changes in performance. Instead, CFN transfers well while performing competitively at the source hospital, despite not using unstable paths which are informative in the training domain.

Normalizing BP ($Y$) for treatment ($A$) and chronic condition ($C$) greatly increases the separability by class in the training data as measured through the classification complexity metrics in Table \ref{table:metrics-sim-exp1}. The feature with the maximum Fisher's Discriminant Ratio in the baseline model is $C$, but this is much smaller than the ratio for the latent feature in CFN. The large decrease in the MST metric indicates fewer examples lies on the class boundary in the normalized problem, and the decrease in intraclass-interclass distance is due to a combination of increased separability and reduced intraclass variance of the latent variables. This is visible in the class conditional densities of factual and counterfactual $Y$ (see supplement).

\textbf{Accuracy of Counterfactual Estimates}
% OLD FIGURE
\begin{figure}[!t]
%\vskip 0.1in
\begin{center}
\centerline{\includegraphics[scale=0.38]{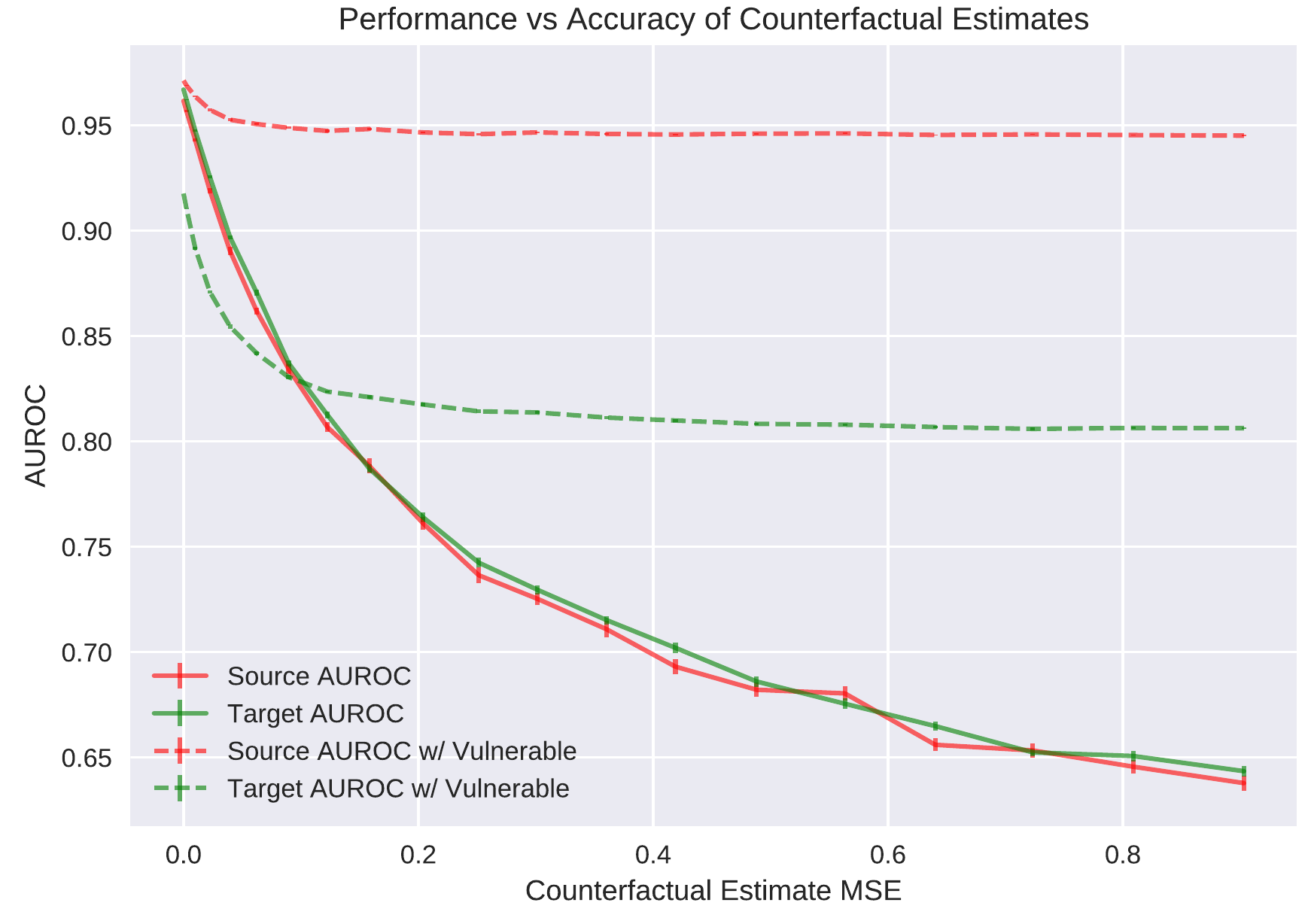}}
\caption{Performance as the accuracy of counterfactual estimates decreases.The error bars denote the standard error of 50 runs.}
\label{fig:exp1-perturb}
\end{center}
\vskip -0.4in
\end{figure}

In this experiment, we examine how the accuracy of counterfactual estimates affects model stability and performance. We expect models that do not use vulnerable variables to have more stable performance, but they may be less accurate in the source domain than models which use vulnerable variables. We bias the true counterfactual values by adding normally distributed noise of increasing scale. Then, we train the counterfactual logistic regressions (with ${p}(T|{Y}(\emptyset,\emptyset))$ and without ${p}(T|{Y}(\emptyset,\emptyset), C, A, Y)$ vulnerable variables) to predict $T$ and evaluate the AUROC at the source and target hospitals. We vary the standard deviation of the perturbations from $0.05$ to $1$ in increments of $0.05$, repeating the process 50 times for each perturbation. 

The results, shown in Figure \ref{fig:exp1-perturb}, confirm what we expect: removing vulnerable variables leads to more stable performance, but performance in the source domain is always lower than when including vulnerable variables. Further, when the counterfactual estimates are accurate (low MSE), removing vulnerable variables yields better performance in the target domain. However, when the MSE is high, the noise removes both the information captured by the adjustment and the information contained in $Y$ itself, causing the model to perform worse in the target domain than a model using vulnerable variables.

%%%%% REAL DATA
\subsection{REAL DATA: SEPSIS CLASSIFICATION}
\subsubsection{Problem and Data Description}
\begin{figure}[!t]
\begin{center}
\centerline{\includegraphics[scale=0.25]{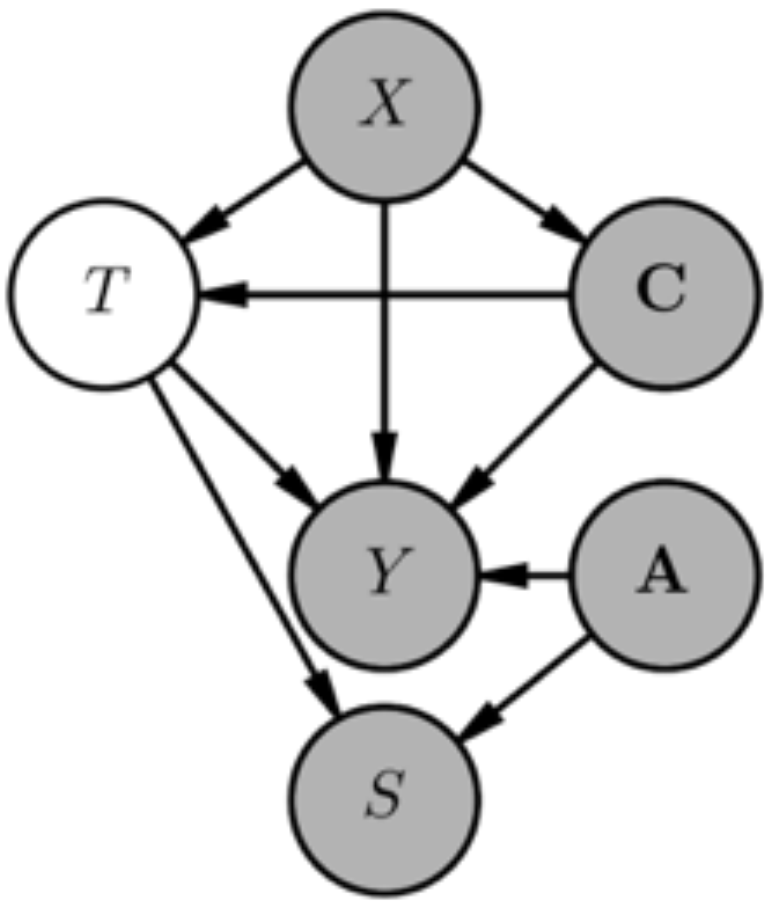}}
% \vskip 0.1in
\caption{Real data experiment DAG of causal mechanisms. The outcome $Y$ is INR and the target $T$ is sepsis. $S$ represents selection bias.}
\label{fig:inr-dag}
\end{center}
\vskip -0.4in
\end{figure}

We apply the proposed method to the task of detecting sepsis, a deadly response to infection that leads to organ failure. Early detection and intervention has been shown to result in improved mortality outcomes \citep{kumar2006duration} which has resulted in recent applications of machine learning to build predictive models for sepsis (e.g., \citet{henry2015targeted,soleimani2017scalable,futoma2017ICML,futoma2017MLHC}). 

To illustrate, we consider a simplified\footnote{Sepsis involves many physiologic markers and corresponding treatments and chronic conditions. We select a small number of variables to demonstrate the key technical concepts.} cross-sectional version of the sepsis detection task using electronic health record (EHR) data from our institution's hospital. Working with a domain expert, we determined the primary factors in the causal mechanism DAG (Figure \ref{fig:inr-dag}) for the effects of sepsis on a single physiologic signal $Y$: the international normalized ratio (INR), a measure of the clotting tendency of blood. The target variable $T$ is whether or not the patient has sepsis due to hematologic dysfunction. We include seven conditions (such as chronic liver disease and sickle cell disease) $\mathbf{C}$ affecting INR that are risk factors for sepsis \citep{goyette2004hematologic,booth2010infection}. We consider five types of relevant treatments $\mathbf A$: anticoagulants, aspirin, nonsteroidal anti-inflammatory drugs (NSAIDs), plasma transfusions, and platelet transfusions, where $A_{ij}=1$ means patient $i$ has received treatment $j$ in the last 24 hours. Finally, we include a demographic risk factor, age $X$. For each patient, we take the last recorded measurements while only considering data up until the time sepsis is recorded in the EHR for patients with $T=1$.

27,633 patients had at least one INR measurement, 388 of whom had sepsis due to hematologic dysfunction. We introduced selection bias $S$ as follows. First, we took one third of the data as a sample from the original target population for evaluation. Second, we subsample the remaining data by rejecting patients with any treatment and without sepsis with probability $0.9$. Third, we split the subsampled data into a random two thirds/one third train/test splits for training on biased data and evaluating on both the biased and unbiased data to measure stability of prediction performance. We repeated the three steps 100 times. We normalize INR in all experiments.

\subsubsection{Experimental Setup}
We apply the proposed method by fitting an additive structural equation for $Y$ using the Bayesian calibration form of \citet{kennedy2001bayesian}:
\begin{align*}
    Y_i &= \beta_{0} + \beta_{1} T_i + \bm{\beta}_{2}^T \mathbf{A}_i + \bm{\beta}_{3}^T \mathbf{C}_i + \beta_{4} X_i\\
    &+ \delta(T_i, \mathbf{A}_i, \mathbf{C}_i, X_i) + \varepsilon\\
    \delta(&\cdot) \sim \mathcal{GP}(0, \gamma^2 K_{rbf})\\
    \varepsilon &\sim \mathcal{N}(0, \sigma^2)\\
\end{align*}
\vskip -8 mm
where $\delta(\cdot)$ is a Gaussian process (GP) prior (with RBF kernel) on the \emph{discrepancy function} since our linear regression model is likely misspecified. 

Due to selection bias and few sepsis examples, for better calibration we place informative priors on $\beta_1, \mathbf{\beta}_2,$ and $\mathbf{\beta_3}$ using $\mathcal{N}(1, 0.1)$ for features that increase INR (e.g., $T$ and anticoagulants) and $\mathcal{N}(-1, 0.1)$ for features that decrease INR (e.g., sickle cell disease and plasma transfusions). For full specification of the other priors  consult the supplement. We compute point estimates for the parameters using MAP estimation and the FITC sparse GP \citep{snelson2006sparse} implementation in PyMC3 \citep{salvatier2016probabilistic}.

While the only vulnerable variables are $\mathbf{A}$ and $Y$, we additionally remove the effects of $\mathbf{C}$ and $X$:
\begin{equation}
{Y_i}(\bm{\emptyset},\bm{\emptyset}, \emptyset) = Y_i -  \hat{\bm{\beta}}_{2}^T \mathbf{A}_i - \hat{\bm{\beta}}_{3}^T \mathbf{C}_i - \hat{\beta}_4 X_i 
\end{equation}
We consider three logistic regression models trained on the biased data for predicting $T$: a baseline using vulnerable variables $p(T|\mathbf{A},\mathbf{C}, Y, X)$, a counterfactually normalized model $p(T|\mathbf{C}, Y(\bm{\emptyset},\bm{\emptyset}, \emptyset), X)$, and a counterfactually normalized model with vulnerable variables $p(T|\mathbf{C}, Y(\bm{\emptyset},\bm{\emptyset}, \emptyset), Y, X)$. We evaluate prediction accuracy on biased and unbiased data using AUROC and the area under the precision-recall curve (AUPRC).

\subsubsection{Results}
\begin{figure}[!t]
%\vskip 0.2in
\begin{center}
\centerline{\includegraphics[scale=0.4]{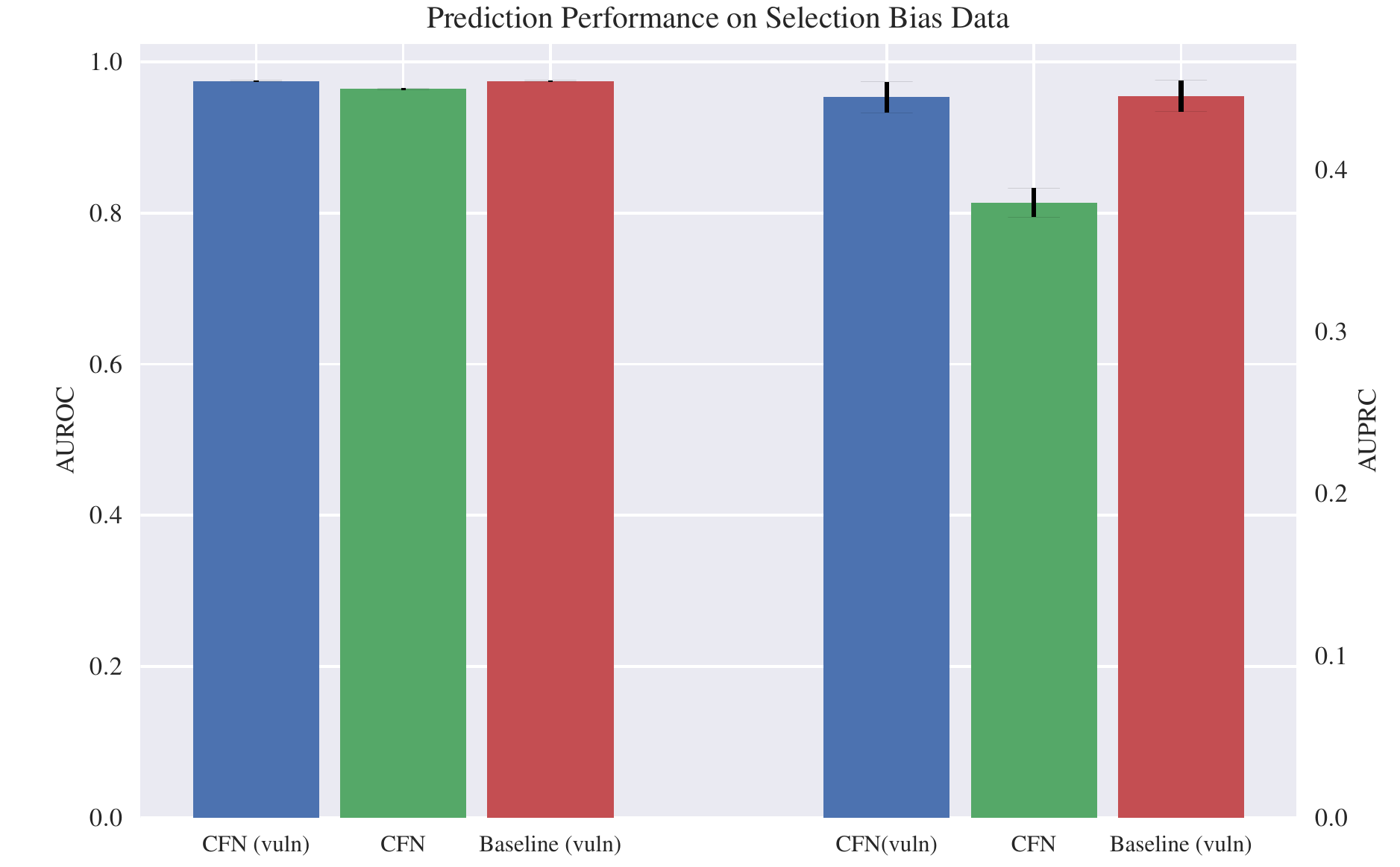}}
\caption{Results for models trained and tested on the selection biased data. In order the average AUROCs are 0.98, 0.96, and 0.98 and the average AUPRCs are 0.45, 0.38, and 0.45. Error bars denote 100 run $95\%$ intervals.}
\label{fig:biased-AUC}
\end{center}
\vskip -0.2in
\end{figure}

\begin{figure}[!t]
%\vskip 0.2in
\begin{center}
\centerline{\includegraphics[scale=0.4]{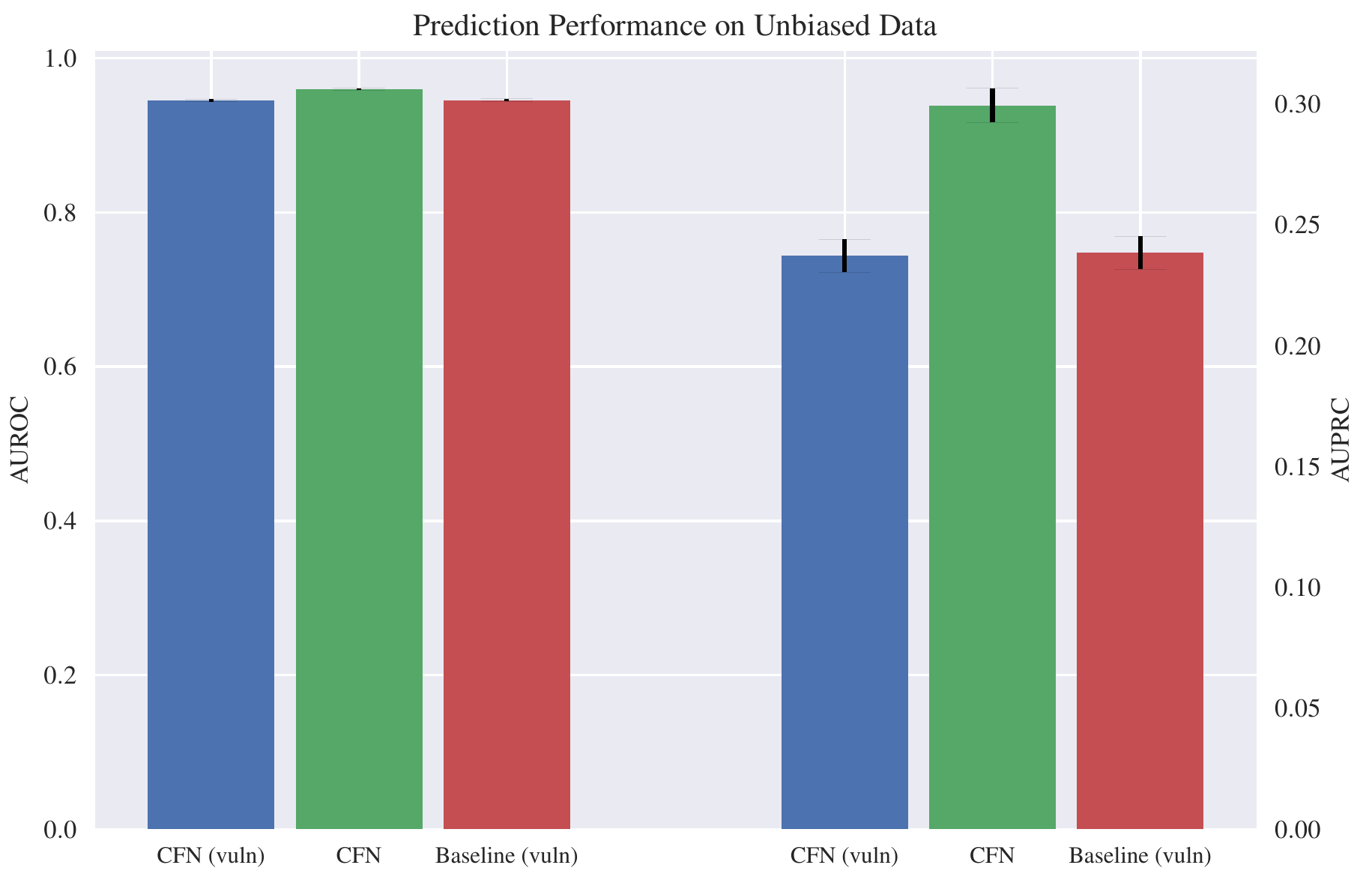}}
\caption{Results for models trained on biased data and tested on unbiased data. In order the average AUROCs are 0.95, 0.96, and 0.95 and the average AUPRCs are 0.24, 0.30, and 0.24. Error bars denote 100 run $95\%$ intervals.}
\label{fig:unbiased-AUC}
\end{center}
\vskip -0.4in
\end{figure}
The selection bias causes a small shift in the marginal distribution of $T$ between populations, such that $2\%$ of the selection biased population has sepsis while $1.4\%$ of the unbiased population has sepsis.
Since most of the examples are negative, the AUPRC is a more interesting measurement because it is sensitive to false positives.

The resulting AUCs when predicting on selection biased data are shown in Figure \ref{fig:biased-AUC}. As expected, the counterfactually normalized model (CFN) performs worse than models using vulnerable variables because it does not take advantage of the unstable path created by selection bias. On unbiased data (Figure \ref{fig:unbiased-AUC}), however, CFN not only outperforms both vulnerable models, but its performance is also more stable to the selection bias: the decrease in AUPRC from source to target is much smaller for CFN.

Interestingly, the the performance of the two vulnerable models is nearly identical. This implies that the CFN model with vulnerable variables does not learn to use counterfactual features, perhaps because the unstable path through selection bias encodes a much stronger relationship. The AUPRC of the non-vulnerable CFN model in selection biased and unbiased data is in between the AUPRC of the vulnerable CFN model in selection biased data (upper bound) and unbiased data (lower bound). This is encouraging, because Figure \ref{fig:exp1-perturb} suggests that if CFN performance were worse in the unbiased data than the vulnerable model's performance, then the counterfactual estimates may be inaccurate. Ultimately, we were able to leverage Counterfactual Normalization to remove vulnerable variables resulting in stabler performance, while outperforming the vulnerable models in unbiased data.

\section{CONCLUSION}\label{sec:conclusion}
When environment-specific artifacts cause training and test distributions to differ, naively training models under i.i.d. assumptions can result in unreliable models which predict using unstable relationships that do not generalize. While some previous solutions use prior knowledge of causal mechanisms to predict potential outcomes that are invariant to differences in policy across environments, they require strong assumptions about no unobserved confounders that may not hold in practice (e.g., \citet{schulam2017NIPS}). Our proposed solution, Counterfactual Normalization, generalizes these approaches to cases in which the unstable relationships (such as ones due to domain-specific policy) may depend on unobserved variables or selection bias. Specifically, we train discriminative models using conditioning sets that only contain variables with stable relationships with the target prediction variable. Then, for vulnerable variables with unstable relationships to the target, we consider adding to the conditioning set counterfactual versions of these variables which sever the unstable paths of statistical influence. Further, because of their causal interpretations, we believe these counterfactual variables are more intelligible for human experts than existing adjustment-based methods. For example, we think it is easier to reason about ``the blood pressure if the patient had not been treated'' than interaction features or kernel embeddings---we would like to test this in a future user study. As demonstrated by our experiments, models trained using Counterfactual Normalization have performance that is more stable to changes across environments and is not coupled to artifacts in the training domain.

\subsubsection*{Acknowledgements}
The authors would like to thank Katie Henry for her help in developing the sepsis classification DAG and Peter Schulam for suggesting experiment 5.1.1 and help clarifying presentation of the method.

\bibliography{references}
\bibliographystyle{apalike}

\newpage
\appendix

\section{Counterfactual Normalization Proofs}
\subsection{Proof of Theorem 1}
We must show that on iteration $k$, removing a variable from $\mathbf{Z}$ does not create an active unstable path to a member of $\mathbf{Z}$ of length $\leq k$.

\begin{proof} Suppose, by contradiction, that on iteration $k$ removing a variable $v\in\mathbf{Z}$ with an active unstable path of length $k$ to $T$ results in an active unstable path of length $\leq k$ with respect to another variable $x\in\mathbf{Z}$.
Note: removing a variable from a conditioning set cannot create new collider paths. Let $\dots$ denote that direction of edge does not matter. We will consider all cases of how $v$ can relate to an unstable path to $x$ from $T$. In the first two cases, $x$ comes before $v$ in the unstable active path to $v$.

Case 1: $T \dots x \dots unstable \dots v$. $x$ does not have an unstable path to $T$. If it did, the path would be of the form $T\dots unstable\dots x \dots unstable \dots v$. Thus, $x$ would have been removed from $\mathbf{Z}$ in a previous iteration because the unstable path is of length $\leq k$ and its active status does not depend on $v$.

Case 2: $T \dots unstable \dots x \dots v$. This is an unstable path to $x$ of length $\leq k$. $x$ cannot be in $\mathbf{Z}$ since it would have been removed in a previous iteration as the active status of this path does not depend on $v$.

Case 3: $T \dots unstable \leftarrow v \rightarrow \dots x$. Creates new active path of length $>k$.

Case 4: $T \dots unstable \rightarrow v \rightarrow \dots x$. Creates new active path of length $>k$.

Case 5: $T \dots unstable \leftarrow v \leftarrow \dots x$. Creates new active path of length $>k$.

Case 6: $T \dots unstable \rightarrow v \leftarrow \dots x$. We remove $v$ from the conditioning set (so it is now considered unobserved). Thus, this collider path is not active. If a descendent of $v$ is conditioned on, then this is an unstable active path of length $> k$.

In all cases, either $x$ would have been removed from $\mathbf{Z}$ before iteration $k$ or the new unstable active path would be of length $>k$. This is a contradiction since we assumed $x\in\mathbf{Z}$ and that the procedure would result in a new active unstable path of length $\leq k$.
\end{proof}

\subsection{Proof of Theorem 2}
We must show that Algorithm \ref{alg:CFN} will not activate any unstable paths with respect to the initial stable set $\mathbf{Z}$. While considering each vulnerable variable $v\in\mathbf{V}$, the resulting set $\mathbf{Z'}$ must remain stable.

\begin{proof}
We assume the initial set $\mathbf{Z}$ is stable. The only way to activate a path (stable or unstable) by adding to a conditioning set is if the new variable being added is a collider or descendant of a collider.

Algorithm \ref{alg:CFN} only adds to $\mathbf{Z}$ in branches 2 and 3 of the if-else. We consider each branch in turn.

In branch 2, the vulnerable variable $v\in\mathbf{V}$ has no active unstable paths to $T$. Thus, by adding $v$ to $\mathbf{Z}$, no unstable path from $v$ to $T$ is active. Next we consider the possibility of $v$ being a collider or descendant of a collider. In these cases, if $v$ activates a path all branches of the activated path $\dots \rightarrow collider \leftarrow \dots$ will be reachable by $v$. Since $v$ has no unstable active paths to $T$, none of the branches of the collider can be unstable paths to $T$. Thus, the collider path is not unstable.

In branch 3 of the if-else, the active unstable paths of the vulnerable variable $v\in\mathbf{V}$ all go through some subset of the observed parents of $v$. Denote this subset of parents of $v$ as $\mathbf{P}\subseteq pa(v)$. We intervene on $\mathbf{P}$ and node-split to add the counterfactual $v(\mathbf{P}=\emptyset)$ to the modified graph. The modified graph contains the path $v(\mathbf{P}=\emptyset) \rightarrow v \leftarrow \mathbf{P}$. Conditioning on $v(\mathbf{P}=\emptyset)$ does not allow for paths through the collider $v$. We know the paths through the parents of $v(\mathbf{P}=\emptyset)$ are stable because we intervened on all parents on unstable paths from $v$. Further, there are no active unstable paths through children of $v$ otherwise $v$ would not be considered in branch 3. As in the case of branch 2, if $v(\mathbf{P}=\emptyset)$ is a collider or descendant of a collider, then since all parts of any activated collider path are reachable from the counterfactual variable through its parents, they must be stable.

Thus, when Algorithm \ref{alg:CFN} adds a variable or counterfactual variable to the stable set $\mathbf{Z}$, no unstable paths are activated and the set remains stable.
\end{proof}

\section{Linear Gaussian Experiment Details}\label{sec:appendix}
\subsection{Simulation Details}
We generate the data from the following linear Gaussian SEMs:
\begin{align*}
    &D = \varepsilon_D\\
    &T = w_1 D + \varepsilon_T\\
    &C = w_2 C + \varepsilon_C\\
    &Y = w_3 T + w_4 C + \varepsilon_Y\\
    &\varepsilon_D,\varepsilon_T,\varepsilon_C,\varepsilon_Y\sim \mathcal{N}(0, 0.1^2)
\end{align*}
In the training domain, we set $w_2=2$ and $w_1,w_3,w_4 \sim \mathcal{N}(0, 1)$. We simulated 100 test domains by varying $w_2$ from $-3$ to $7$ in equally spaced increments. In all domains we generated 30000 samples. We fit all models (structural equation of $Y$, naive model, ideal model, and counterfactually normalized model) on the training data using least squares then applied them to all test domains.
\subsection{Counterfactual Normalization}
\begin{figure}[!h]
%\vskip 0.2in
\begin{center}
\centerline{\includegraphics[scale=0.48]{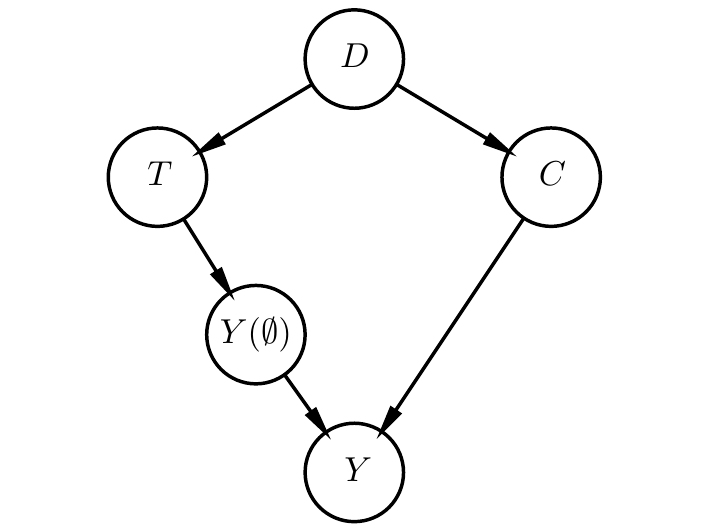}}
\caption{Modified graph after node-splitting.}
\label{fig:lin-sem-node-split}
\end{center}
\vskip -0.2in
\end{figure}
$D$ is unobserved and $p(C|D)$ varies between domains. Thus, $C$ is vulnerable becomes conditioning on $C$ results in the unstable active path $C\leftarrow D \rightarrow T$. $Y$, as a descendant of $C$, is also vulnerable because without conditioning on $C$, conditioning on $Y$ results in the unstable active path  $Y \leftarrow C \leftarrow D \rightarrow T$. The only shared child of vulnerable variables and $T$ is $Y$. This means we need to perform node-splitting on $Y$ to generate an intermediate counterfactual version, $Y(C=\emptyset)$ for which we have removed the effects of the vulnerable parent $C$. The graph after node-splitting is shown in Figure \ref{fig:lin-sem-node-split}.

The counterfactual variable $Y(C=\emptyset)$ inherits the parents of $Y$ from the original graph that we did not intervene upon (i.e., set to null). In this case, the parents it inherits are $T$ and the unpictured $\varepsilon_Y$. The SEMs in the modified graph are:
\begin{align*}
    &D = \varepsilon_D\\
    &T = w_1 D + \varepsilon_T\\
    &C = w_2 C + \varepsilon_C\\
    &Y(C=\emptyset) = w_3 T + \varepsilon_Y\\
    &Y = Y(C=\emptyset) + w_4 C\\
    &\varepsilon_D,\varepsilon_T,\varepsilon_C,\varepsilon_Y\sim \mathcal{N}(0, 0.1^2)
\end{align*}
Importantly, note that the counterfactual $Y(C=\emptyset)$ is now a random quantity, while $Y$ is a \emph{deterministic} function of $Y(C=\emptyset)$ and $C$. The modified SEMs are observationally equivalent to the original SEMs (marginalizing over the latent counterfactual yields the same joint as in the original system). 

We can recover $Y(C=\emptyset)$ by observing $Y$ and $C$: $Y(C=\emptyset) = Y - w_4 C$.
This makes clear the role of the intermediate counterfactual variable: it isolates the effect of the target on the outcome from the effects of vulnerable variables (or any other parents we set to null) on the outcome.

\section{Cross Hospital Transfer Experiment Details}
\subsection{Simulation Details}
We generate data at the source hospital as follows:
\begin{align*}
    &D \sim Bernoulli(0.5)\\
    &T|D=1 \sim Bernoulli(0.7)\\
    &T|D=0 \sim Bernoulli(0.1)\\
    &C|D=1 \sim Bernoulli(0.9)\\
    &C|D=0 \sim Bernoulli(0.1)\\
    &A|C=1 \sim 24 * Beta(0.5, 2.1)\\
    &A|C=0 \sim 24 * Beta(0.7, 0.2)\\
    &Y \sim \mathcal{N}(-0.5T + -0.3C + f(A), 0.2^2)\\
    &f(A) = 2\exp(-0.08A)
\end{align*}

At the target hospital, we change $p(C|D)$ and $p(A|C)$:
\begin{align*}
    &D \sim Bernoulli(0.5)\\
    &T|D=1 \sim Bernoulli(0.7)\\
    &T|D=0 \sim Bernoulli(0.1)\\
    &C|D=1 \sim Bernoulli(0.1)\\
    &C|D=0 \sim Bernoulli(0.9)\\
    &A|C=1 \sim 24 * Beta(1.7, 1.1)\\
    &A|C=0 \sim 24 * Beta(1.7, 1.1)\\
    &Y \sim \mathcal{N}(-0.5T + -0.3C + f(A), 0.2^2)\\
    &f(A) = 2\exp(-0.08A)
\end{align*}

We generate 2000 patients from the source hospital, using 1600 for training and holding out 400 to evaluate performance on the source hospital. We evaluate cross hospital transfer on 1000 patients generated from the second hospital.

\subsection{Class Conditional Densities}
\begin{figure}[h]
%\vskip 0.2in
\begin{center}
\centerline{\includegraphics[scale=0.43]{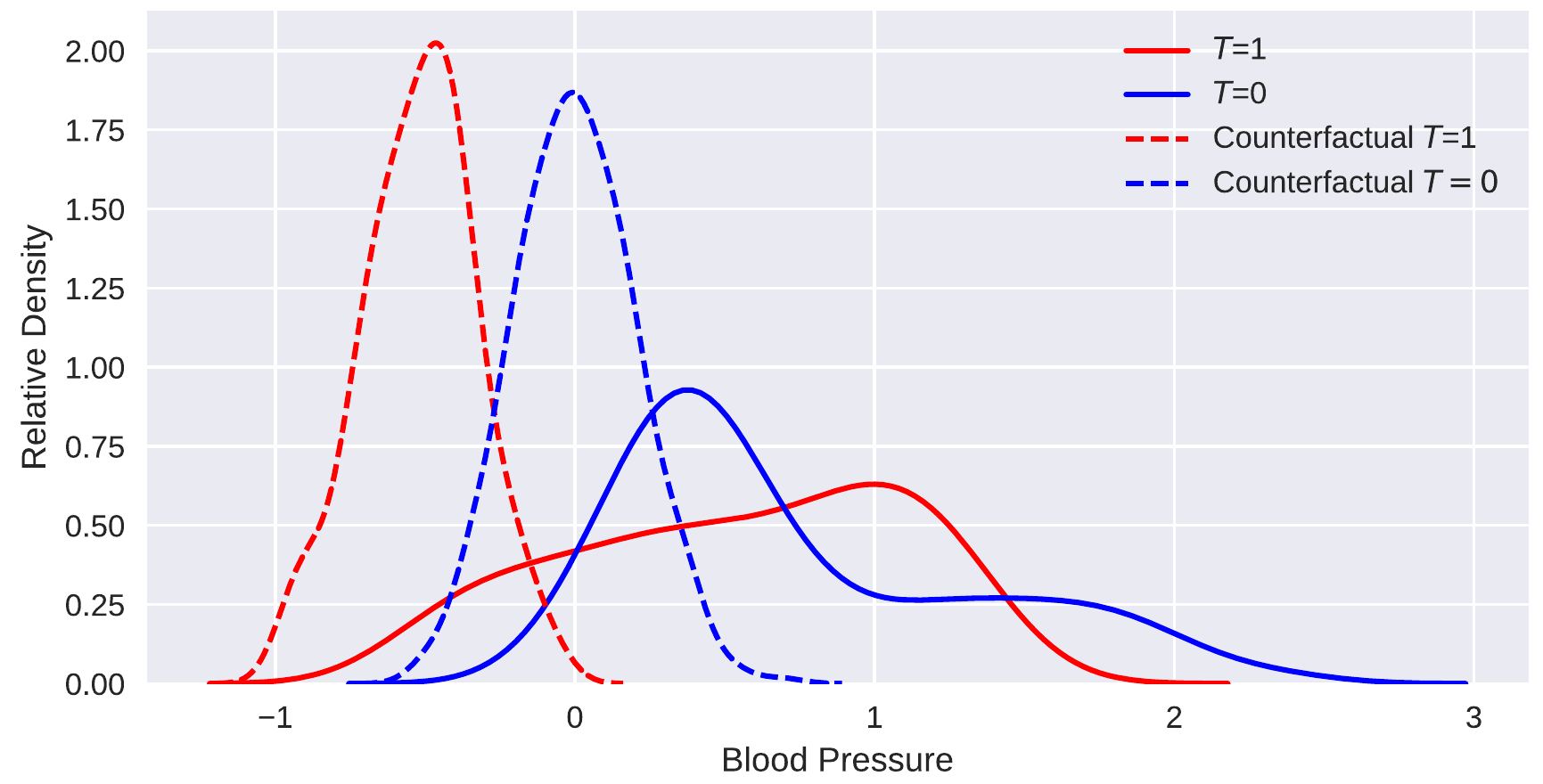}}
\caption{The distribution of factual (solid line) and estimated counterfactual (dashed line) blood pressures at the source hospital in the simulated experiment. It is easier to discriminate $T$ from counterfactual BP than from observed BP due to decreased overlap in the distributions.}
\label{fig:exp1-bp-density}
\end{center}
\end{figure}

\section{Real Data Experiment Details}\label{sec:appendix-real-data}
Our posited structural equation for INR ($Y$) is a linear regression of the parents of $Y$ in Figure 5. The seven conditions ($\mathbf{C}$) we include are liver disease, sickle cell disease, chronic kidney disease, any immunodeficiency, any cancer, diabetes, and stroke. In the statistical uncertainty quantification community, one technique for parameter calibration when the computer model is misspecified is to jointly estimate model parameters with an explicit discrepancy function that captures model inadequacy (Kennedy and O'Hagan, 2001). The discrepancy function has a Gaussian process prior. The parameters to estimate are the linear regression parameters $\bm \beta$, the observation noise scale $\sigma$, the RBF kernel output scale $\gamma$, and the kernel lengthscales $\bm \ell$.

We placed the following priors on parameters:
\begin{align*}
    &\gamma \sim Half\mathcal{N}(1) \\
    &\sigma \sim Half\mathcal{N}(1) \\
    &\bm \ell \sim Gamma(4, 4) \\
    &\beta_0 \sim \mathcal{N}(0, 1)\\
    &\beta_1, \bm{\beta}_2^{(1, 2, 3, 5)}, \bm{\beta}_3^{(1)} \sim \mathcal{N}(1, 0.1)\\
    &\bm{\beta}_2^{(4)}, \bm{\beta}_3^{(2)} \sim \mathcal{N}(-1, 0.1)\\
    &\beta_4 \sim \mathcal{N}(0, 0.1)
\end{align*}
We used the PyMC3 FITC sparse GP approximation implementation with 20 inducing points initialized by k-means.

\end{document}